\documentclass{mc}

\usepackage[
  paper=a4paper,         
  left=25mm,
  right=25mm,
  top=25mm,
  bottom=25mm,
  includeheadfoot,       
  heightrounded
]{geometry}

\usepackage[utf8]{inputenc}
\usepackage{textgreek}
\usepackage{tikz}
\renewcommand\datereceived{}
\renewcommand\dateaccepted{}
\usepackage{mathrsfs}
\usepackage{caption}
\usepackage{subcaption}
\usepackage{multirow}
\usepackage{xcolor,sectsty}
\usepackage{booktabs}

\usepackage{algorithm}
\usepackage{algpseudocode}

\usepackage{titlesec}
\titleformat{\subsection}
  {\normalsize} 
  {\thesection.\arabic{subsection}} 
  {1em}
  {\textit}

\definecolor{astral}{RGB}{46,116,181}
\definecolor{darkslategray}{rgb}{0.18, 0.31, 0.31}
\definecolor{warmblack}{rgb}{0.0, 0.46, 0.36}
\sectionfont{\color{black}\normalsize}

\renewcommand{\thetable}{\thesection.\arabic{table}} 

\usepackage{hyperref}

\begin{document}
	
	
	\title{{An adaptive wavelet-based PINN for problems with localized high-magnitude source}}
	
	



    

    \author[Pandey {\em et al.}]{ {\bf Himanshu Pandey}\affil{1} and {\bf Ratikanta Behera}\affil{1}}
    
    \address{\affilnum{1}{\scriptsize Department of Computational and Data Sciences, Indian Institute of Science, Bangalore, 560012, India}} 
	
	\emails{{\tt phimanshu@iisc.ac.in} (H. Pandey), {\tt ratikanta@iisc.ac.in} (R. Behera)}
	%
	

\begin{abstract}
In recent years, physics-informed neural networks (PINNs) have gained significant attention for solving differential equations, although they suffer from two fundamental limitations, namely, spectral bias inherent in neural networks and loss imbalance arising from multiscale phenomena. This paper proposes an adaptive wavelet-based PINN (AW-PINN) to address the extreme loss imbalance characteristic of problems with localized high-magnitude source terms. Such problems frequently arise in various physical applications, such as thermal processing, electro-magnetics, impact mechanics, and fluid dynamics involving localized forcing. The proposed framework dynamically adjusts the wavelet basis function based on residual and supervised loss. This adaptive nature makes AW-PINN handle problems with high-scale features effectively without being memory-intensive. Additionally, AW-PINN does not rely on automatic differentiation to obtain derivatives involved in the loss function, which accelerates the training process. The method operates in two stages, an initial short pre-training phase with fixed bases to select physically relevant wavelet families, followed by an adaptive refinement that adapts scales and translations without populating high-resolution bases across entire domains. Theoretically, we show that under certain assumptions, AW-PINN admits a Gaussian process limit and derive its associated NTK structure. We evaluate AW-PINN on several challenging PDEs featuring localized high-magnitude source terms with extreme loss imbalances having ratios up to $10^{10}:1$. Across these PDEs, including transient heat conduction, highly localized Poisson problems, oscillatory flow equations, and Maxwell’s equations with a point charge source, AW-PINN consistently outperforms existing methods in its class.\\
\end{abstract}
\keywords{Deep learning, Physics-informed machine learning, Loss balancing, Wavelet bases}
	
	
\maketitle	

\section{Introduction}\label{sec: Intro}

Recent years have witnessed significant progress in learning-based methods, mainly due to enhanced computational capabilities, advanced optimization algorithms, and the development of automatic differentiation techniques \cite{JMLR:autograd, NEURIPS2019_ADpytorch}. These advances have facilitated the widespread adoption of deep learning approaches within the scientific community. Among these, physics-informed neural networks (PINNs) \cite{RAISSI2019686} have emerged as a promising tool for solving partial differential equations (PDEs) in both forward and inverse settings. PINNs provide a meshless framework that optimizes the loss function by incorporating physical information through the residuals of differential equations. This methodology offers a robust alternative to conventional numerical methods, especially for tackling high-dimensional PDEs \cite{NNHighDim} and problems defined on complex geometries \cite{SAHLICOSTABAL2024_complexgeometry}. In particular, the capability of PINNs stands out when addressing inverse problems and generating highly resolved solutions by generalizing solutions across a complete computational domain rather than discrete points. Additionally, the PINN framework has been successfully extended to address integro-differential equations \cite{YUAN2022111260_integroDiff}, stochastic differential equations \cite{SIAM_stochastic, TUSHAR2023112004_stchastic}, and fractional differential equations \cite{SIAM_fractional, SM2024_fractional_optimization}. For a comprehensive overview of the theory and applications of PINNs, refer to \cite{Cuomo2022_review, zhang2024physics_review, physicsoffluids_review} and the references therein.

Although PINNs offer notable advantages, several limitations have been extensively discussed and addressed in the literature. One key limitation arises from the spectral bias inherent in neural networks \cite{pmlr-v97-rahaman19a_spectralbias, Xu2025_spectral_bias}, which causes PINNs to preferentially learn low-frequency components of the solution. Jacot {\em et al.} \cite{NEURIPS2018_5a4be1fa_jacot} formulated the neural tangent kernel (NTK) theory, demonstrating that the training process of an infinite-width neural network is equivalent to kernel regression with the NTK. Their analysis shows that loss function components associated with larger eigenvalues converge more rapidly, and the eigenvalues decrease sharply as the frequency of the target function increases. Building on NTK theory, Wang {\em et al.} \cite{WANG2022110768_PINNNTK} investigated the training dynamics of PINNs and confirmed the presence of spectral bias in PINNs. Additionally, the work by Wang {\em et al.} \cite{wang2021understanding_gradientflow} revealed that the gradient flow in PINN models becomes stiff for PDEs with high-frequency components, resulting in imbalanced gradients during backpropagation. To overcome the challenge of learning high-frequency components, Tancik {\em et al.} \cite{NEURIPS2020_55053683_FF} proposed Fourier feature mapping, which enhances the network's ability to represent high-frequency functions. Building on this approach, Wang {\em et al.} \cite{WANG2021113938_eigen_FF} demonstrated that Fourier feature mapping modulates the frequency of NTK eigenvectors and developed an improved PINN architecture by integrating Fourier features before the fully connected layers. This architecture enhances the capacity of PINNs to solve PDEs with pronounced high-frequency characteristics. Further advancements include the work of Liu {\em et al.} \cite{LIU2025106886_adaptive_FF}, which used multiple Fourier feature encoding and adaptively adjusted encoding frequencies based on the loss function.

In addition to spectral bias, PINNs also encounter poor training dynamics when there is a significant magnitude disparity between different loss terms. This loss imbalance is prominent in multiscale PDEs, where the optimizer tends to minimize the most dominant loss term at the expense of others. Such an imbalance hinders the ability of the model to accurately learn and represent all relevant features of the solution. Several studies have been conducted to address the loss balancing issue in PINNs. McClenny {\em et al.} proposed Self-adaptive PINNs (SA-PINNs) \cite{MCCLENNY2023111722_SA} that use trainable self-adaptive weights in loss terms at each training point. During optimization, the loss is minimized with respect to the network weights and maximized with respect to the self-adaptive weights. A non-negative, strictly increasing, and differentiable mask function, which takes self-adaptive weights as an argument, is used to make the network pay more attention to training points that are hard to fit. To further address the issue of loss term imbalance, Wang {\em et al.} \cite{WANG2024113112_MM} developed a modified loss function by imposing different power operations on each loss term to make them of the same order of magnitude. Further, Bischof {\em et al.} \cite{BISCHOF2025117914_ReLoBR} developed relative loss balancing with random look-back, in which loss weights are updated based on loss statistics from previous training steps using exponential decay. Additionally, a Bernoulli random variable is used to decide whether relative improvements since the beginning of the training should be carried forward. This random look-back helps the model escape local minima by changing the loss space. These loss balancing techniques require careful selection of hyperparameters or prior knowledge of the solution to appropriately scale the loss terms for optimal performance. Alternatively, rather than explicitly balancing the loss function, Pandey {\em et al.} introduced wavelet-based PINNs (W-PINNs) \cite{pandey2025efficientwaveletbasedphysicsinformedneural}, which train the model in wavelet space and bring it back to physical space using precomputed wavelet matrices. In wavelet space, the multiscale nature of the problem is smoothed out, making the optimization effective. Eventually, it transforms the optimization process to learn weights corresponding to wavelet bases composed of various scales and translates, inherently addressing both loss balancing and the spectral bias nature of the PINN. Furthermore, W-PINNs do not rely on automatic differentiation for derivatives involved in the loss function, which significantly accelerates the training process. However, this spectral approach is promising, but managing wavelet matrices, particularly for problems involving extremely high-scale features, becomes memory-intensive. Moreover, the exponential growth in the number of wavelet basis functions with increasing scales further increases optimization challenges.  

In the present study, we propose an adaptive wavelet-based PINN (AW-PINN), which extends the W-PINN framework by enabling the network to dynamically adapt the wavelet family, rather than relying on a fixed basis. This adaptive nature enhances the effectiveness of the method in resolving problems with high-scale features. In particular, we focus on problems with localized high-magnitude source terms. Such problems frequently arise in various physical contexts, including heat conduction in materials with spatially localized sources, like in welding, electromagnetic wave propagation in inhomogeneous media, and fluid dynamics involving localized forcing. The rest of the manuscript is organized into four sections. Starting with Section \ref{sec: Related}, which reviews several related methodologies, we later compare the performance of our method against these. Followed by Section \ref{sec: AW-PINN} describes the proposed AW-PINN framework in detail, and subsequently, the testing over a couple of challenging problems in Section \ref{sec: Results}. Finally, Section \ref{sec: conclude} summarizes the main findings along with the method’s limitations, and potential directions for future research.


\section{Related Works} \label{sec: Related}

\subsection{PINN framework for problems with multi-magnitude loss terms}\label{MMPINN}

Wang {\em et al.} \cite{WANG2024113112_MM} developed a PINN framework for multiscale problems with multi-magnitude loss terms (MMPINN). To balance the magnitude of different loss terms, they proposed a regularization strategy that applies power operations on each loss term. To further refine their strategy, they incorporate multi-level training by adjusting the power exponent at each level and using a grouping regularization for different subdomains. The effectiveness of this regularization scheme is demonstrated through several PDE examples exhibiting loss imbalance, including a few with strong localized source terms. This makes it a fair comparison benchmark for our proposed method.

Consider the following general form of PDE
\begin{equation}
    \begin{cases}
        \mathscr{P}[u(\boldsymbol{x})] = f(\boldsymbol{x}), \quad \boldsymbol{x}\in \Omega,\\[4pt]
        \mathscr{B}[u(\boldsymbol{x})] = g(\boldsymbol{x}), \quad \boldsymbol{x} \in \partial \Omega,
    \end{cases}
    \label{eq:1-PDE}
\end{equation}
where $\mathscr{P}[\cdot]$ represents a differential operator on the $d$-dimensional domain $\Omega \subset \mathbb{R}^d$, which also includes the time variable as a component, $f$ is the source term, and $\mathscr{B}$ corresponds to the given initial and boundary conditions. Let $\hat u(\boldsymbol{x};\boldsymbol{\theta})$ represent the PINN-based approximation to the PDE \ref{eq:1-PDE}, with $\boldsymbol{\theta}$ denoting the trainable parameters of the model. These parameters, which include weights and biases of the network, are obtained by optimizing the loss function. For the traditional PINN, the loss function can be expressed as follows 

\begin{equation}
    \begin{cases}
        \mathcal{L}(\boldsymbol{\theta})
        = \omega_r\,\mathcal{L}_{res}(\boldsymbol{\theta}) + \omega_s\,\mathcal{L}_{sup}(\boldsymbol{\theta}),\\[6pt]
        
        \mathcal{L}_{res}(\boldsymbol{\theta})
        = \dfrac{1}{N_{res}}\displaystyle\sum_{i=1}^{N_{res}}\bigl\lvert \mathscr{P}\!\left[\hat u(\boldsymbol{x}_i^r;\boldsymbol{\theta})\right] - f(\boldsymbol{x}_i^r) \bigr\rvert^2,\quad x_i^r\in\Omega,\\[6pt]
        
        \mathcal{L}_{sup}(\boldsymbol{\theta})
        = \dfrac{1}{N_{sup}}\displaystyle\sum_{i=1}^{N_{sup}}\bigl\lvert \mathscr{B}\!\left[\hat u(\boldsymbol{x}_i^s;\boldsymbol{\theta})\right] - g(\boldsymbol{x}_i^s) \bigr\rvert^2, \quad x_i^s \in \partial \Omega,\\
    \end{cases}
    \label{eq:2-PINN-Loss}
\end{equation}
where the total loss $\mathcal{L}$ is composed of the residual loss, $\mathcal{L}_{res}$, and the supervised loss $\mathcal{L}_{sup}$. $N_{res}$ and $N_{sup}$ are the numbers of residual and supervised training points in $\Omega$ and $\partial \Omega$, respectively. This loss function is generally minimized using the most popular optimization algorithms, namely, Adam \cite{kingma2017adammethodstochasticoptimization} and L-BFGS \cite{LBFGS}. 

For highly imbalanced loss terms, MMPINN proposes an exponent regularized loss function, for which the loss function takes the following form
\begin{equation}
    \mathcal{\bar L}(\boldsymbol{\theta})
        = \mathcal{L}_{res}^{\frac{1}{p}}(\boldsymbol{\theta}) + \mathcal{L}_{sup}^{\frac{1}{q}}(\boldsymbol{\theta}), \quad p,q\in \mathbb{Z}^+, 
    \label{eq:loss}
\end{equation}
where $p$ and $q$ are positive integer regularization parameters, which are chosen in a way that balances the order of magnitude of different loss terms. However, the effectiveness of this regularization depends on choosing $p$ and $q$ within an optimal range that is problem-specific.

\subsection{Wavelet-based PINN}\label{W-PINN}

The Wavelet-based PINN (W-PINN) \cite{pandey2025efficientwaveletbasedphysicsinformedneural} learns the solution in wavelet space spanned by wavelet basis functions at multiple scales and translations that cover the entire domain. The learned solution is then mapped back to physical space using predefined wavelet basis transformation matrices. Let
\(
\Omega=\prod_{n=1}^{d}[a_n,b_n]\subset\mathbb{R}^d
\)
be a compact domain. For a mother wavelet \(\psi\) on \(\mathbb{R}\), the scaled and translated wavelets can be defined as
\begin{equation}
    \psi_{j,k}(x)\;=\;2^{j/2}\,\psi\!\left(2^{j}x-k\right),
    \qquad j,k\in\mathbb{Z},
    \label{eq:wav-1d}
\end{equation}
where $j$ and $k$ are scale and translation parameters. This can be extended in $d$-dimensions by using a separable tensor-product basis in the following manner
\begin{equation}
    \Psi_{\boldsymbol{j},\boldsymbol{k}}(\boldsymbol{x})
    =\prod_{n=1}^{d} 2^{j_n/2}\,\psi\left(2^{j_n}x_n-k_n\right),
    \qquad
    \boldsymbol{x} \in \Omega,~~ \boldsymbol{j}=(j_1,\dots,j_d),\ \boldsymbol{k}=(k_1,\dots,k_d)\in\mathbb{Z}^d. 
    \label{eq:wav-dd}
\end{equation}

The scale parameters are chosen from a finite resolution-level sets \(J_n=\{J_{n,1},\dots,J_{n,N_n}\}\subset\mathbb{Z}\) for each coordinate \(n=1,\dots,d\). For each \(j_n\in J_n\), define the integer translation range 

\begin{equation}
    K_n(j_n)\;:=\;\Big\{\,k\in\mathbb{Z}\ :\ 
    \big\lfloor (a_n-\gamma)\,2^{j_n}\big\rfloor\ \le\ k\ \le\ \big\lceil (b_n+\gamma)\,2^{j_n}\big\rceil
    \,\Big\},
    \label{eq:trans-range}
\end{equation}
which ensures coverage of \(\Omega\) at each resolution. Here \(\gamma>0\) is a fixed translation hyperparameter. For this hierarchical multi-resolution basis, we can define a multi-index set as
\begin{equation}
    \mathcal{I}\;=\;\Big\{(\boldsymbol{j},\boldsymbol{k})\in\mathbb{Z}^{d}\times\mathbb{Z}^{d}\ :\ j_n\in J_n,\ k_n\in K_n(j_n)\ \text{for } n=1,\dots,d\Big\}.
\end{equation}

Consider a bijection mapping \(\mathcal{M}:\mathcal{I}\to\{1,\dots,N_{\mathrm{fam}}\}\), and define
\(\Psi_{i}(\boldsymbol{x}):=\Psi_{\boldsymbol{j},\boldsymbol{k}}(\boldsymbol{x})\) for \(i=\mathcal{M}(\boldsymbol{j},\boldsymbol{k})\). Let $\hat u^\psi(\boldsymbol{x};\boldsymbol{\theta})$ is a W-PINN approximation to the PDE \ref{eq:1-PDE} such that
\begin{equation}
        \hat u^\psi(\boldsymbol{x};\boldsymbol{\theta}) = \displaystyle\sum_{i=1}^{N_{fam}} c_i(\boldsymbol{\theta})\Psi_i(\boldsymbol{x}) + \mathcal{B}, \quad \boldsymbol{x}\in\Omega\cup\partial \Omega,
        \label{eq:WPINN-app}
\end{equation}
where \(\boldsymbol{\theta}\cup\{\mathcal{B}\}\) are the trainable parameters of W-PINN network for learnable coefficients \(\{c_i\}_{i=1}^{N_{\mathrm{fam}}}\), and the fixed basis \(\{\Psi_i\}_{i=1}^{N_{\mathrm{fam}}}\). Training minimizes the loss in Equation \eqref{eq:2-PINN-Loss}, with \(\hat u\) replaced by \(\hat u^{\psi}\) from \eqref{eq:WPINN-app}. Instead of using autograd, derivatives required in the loss function are obtained by analytically differentiating the basis functions \(\Psi_i\), i.e,
\begin{equation}
    \partial_{x_m}\Psi_{\boldsymbol{j},\boldsymbol{k}}(\boldsymbol{x})
    = 2^{j_m}\,2^{j_m/2}\,\psi'\left(2^{j_m}x_m-k_m\right)
    \times \prod_{n\neq m} 2^{j_n/2}\,\psi\left(2^{j_n}x_n-k_n\right),
    \label{eq:derv}
\end{equation}
and higher-order derivatives follow similarly. The W-PINN effectively addresses multiscale PDEs because the localized wavelet basis transforms the problem into wavelet space, where multiscale features become smoother, eventually facilitating the training process.

\subsection{Neural Tangent Kernel theory for PINN}\label{NTK}

The neural tangent kernel (NTK) \cite{NEURIPS2018_5a4be1fa_jacot, WANG2022110768_PINNNTK} provides a theoretical perspective for training dynamics of neural networks. Consider an infinite width PINN approximating the PDE \eqref{eq:1-PDE} with $\hat u(\boldsymbol{x};\boldsymbol{\theta})$.  For gradient descent algorithm with a sufficiently small learning rate, Wang {\em et al.} \cite{WANG2022110768_PINNNTK} showed that, in the infinite-width limit, the training dynamics converge to independent, identically distributed Gaussian processes with zero mean. Based on this, NTK for PINN is defined as
\begin{equation}
    \boldsymbol{\mathcal K}(t) = 
    \begin{bmatrix}
    \boldsymbol{\mathcal K}_{pp}(t) & \boldsymbol{\mathcal K}_{pb}(t) \\
    \boldsymbol{\mathcal K}_{bp}(t) & \boldsymbol{\mathcal K}_{bb}(t)
    \end{bmatrix},
\end{equation}
where $t$ is the training step. For given training points $\{ \boldsymbol{x}_i^{r}, f (\boldsymbol{x}_i^{r})\}$ and $\{ \boldsymbol{x}_i^{s}, g (\boldsymbol{x}_i^{s})\}$, $\{i,j\}$-th element of submatrices of the NTK matrix $ \boldsymbol{\mathcal K}$ is obtained by

\begin{align*}
\left(\boldsymbol{\mathcal{K}}_{pp}\right)_{ij}(t) &= 
\left\langle 
\frac{ \partial \mathscr{P} \left[ \hat u\left( \boldsymbol{x}_i^{r}; \boldsymbol{\theta}(t) \right) \right] }{ \partial\boldsymbol{\theta} },\ 
\frac{ \partial \mathscr{P} \left[ \hat u\left( \boldsymbol{x}_j^{r}; \boldsymbol{\theta}(t) \right) \right] }{ \partial\boldsymbol{\theta} }
\right\rangle, \\
\left(\boldsymbol{\mathcal{K}}_{pb}\right)_{ij}(t) &= 
\left\langle
\frac{ \partial \mathscr{P} \left[ \hat u\left( \boldsymbol{x}_i^{r}; \boldsymbol{\theta}(t) \right) \right] }{ \partial\boldsymbol{\theta} },\ 
\frac{ \partial \mathscr{B} \left[ \hat u\left( \boldsymbol{x}_j^{s}; \boldsymbol{\theta}(t) \right) \right] }{ \partial\boldsymbol{\theta} }
\right\rangle, \\
\left(\boldsymbol{\mathcal{K}}_{bb}\right)_{ij}(t) &= 
\left\langle
\frac{ \partial \mathscr{B} \left[ \hat u\left( \boldsymbol{x}_i^{s}; \boldsymbol{\theta}(t) \right) \right] }{ \partial\boldsymbol{\theta} },\ 
\frac{ \partial \mathscr{B} \left[ \hat u\left( \boldsymbol{x}_j^{s}; \boldsymbol{\theta}(t) \right) \right] }{ \partial\boldsymbol{\theta} }
\right\rangle,
\end{align*}
where $\langle\cdot,\cdot\rangle$ denotes the inner product over all trainable parameters. Further, using the NTK matrix, training a PINN through gradient descent can be expressed as
\begin{equation}
\left[
\begin{array}{c}
\dfrac{\partial \mathscr{P}\left[\hat u\left(\boldsymbol{x}^{r};\,\boldsymbol{\theta}(t)\right)\right]}{\partial t}
\\[1em]
\dfrac{\partial \mathscr{B}\left[\hat u\left(\boldsymbol{x}^{s};\,\boldsymbol{\theta}(t)\right)\right]}{\partial t}
\end{array}
\right]
=
-\boldsymbol{\mathcal{K}}(t)
\left[
\begin{array}{c}
\mathscr{P}\left[\hat u\left(\boldsymbol{x}^{r};\,\boldsymbol{\theta}(t)\right)\right]-f\left(\boldsymbol{x}^{r}\right)
\\
\mathscr{B}\left[\hat u\left(\boldsymbol{x}^{s};\,\boldsymbol{\theta}(t)\right)\right]-g\left(\boldsymbol{x}^{s}\right)
\end{array}
\right].
\label{eq:NTK_eq}
\end{equation}

Under the assumption of training a sufficiently large-width PINN with a small enough learning rate, Wang {\em et al.} \cite{WANG2022110768_PINNNTK} showed the NTK remains constant, ie, $\boldsymbol{\mathcal{K}}(t)\approx\boldsymbol{\mathcal{K}}(0)$. Using this solution of the Equation \eqref{eq:NTK_eq} becomes
\begin{equation}
\left[
\begin{array}{c}
\mathscr{P}\left[\hat u\left(\boldsymbol{x}^{r};\,\boldsymbol{\theta}(t)\right)\right] - f\left(\boldsymbol{x}^{r}\right)
\\
\mathscr{B}\left[\hat u\left(\boldsymbol{x}^{s};\,\boldsymbol{\theta}(t)\right)\right] - g\left(\boldsymbol{x}^{s}\right)
\end{array}
\right]
\approx
- e^{-\boldsymbol{\mathcal{K}}(0) t}
\left[
\begin{array}{c}
f\left(\boldsymbol{x}^{r}\right) 
\\
g\left(\boldsymbol{x}^{s}\right)
\end{array}
\right],
\label{eq:NTK_sol}
\end{equation}
which provides error evolution during training, which can be further simplified by spectral decomposition of the NTK: $\boldsymbol{\mathcal{K}}(0) = \boldsymbol{\mathcal{Q}}^T \boldsymbol{\Lambda} \boldsymbol{\mathcal{Q}}$, here $\boldsymbol{\mathcal{Q}}$ and $\boldsymbol{\Lambda}$ are orthogonal matrix and corresponding diagonal eigenvalue matrix. Then the Equation \eqref{eq:NTK_sol} take the form

\begin{equation}
\left[
\begin{array}{c}
\mathscr{P}\left[\hat u\left(\boldsymbol{x}^{r};\,\boldsymbol{\theta}(t)\right)\right] - f\left(\boldsymbol{x}^{r}\right)
\\
\mathscr{B}\left[\hat u\left(\boldsymbol{x}^{s};\,\boldsymbol{\theta}(t)\right)\right] - g\left(\boldsymbol{x}^{s}\right)
\end{array}
\right]
\approx
- \boldsymbol{\mathcal{Q}}^T e^{-\boldsymbol{\Lambda} t} \boldsymbol{\mathcal{Q}}
\left[
\begin{array}{c}
f\left(\boldsymbol{x}^{r}\right) 
\\
g\left(\boldsymbol{x}^{s}\right)
\end{array}
\right].
\label{eq:NTK_sol}
\end{equation}

This formulation suggests that the $i^{th}$ error component exhibits exponential decay at a rate determined by the corresponding NTK eigenvalue; hence, larger eigenvalues yield faster decay of the error and, consequently, faster convergence of the PINN.


\section{Adaptive Wavelet-based PINN}\label{sec: AW-PINN}

The Adaptive Wavelet-based PINN (AW-PINN) extends the W-PINN framework by making the wavelet family adaptive, allowing scales and translates to be learned as parameters, unlike the fixed values in W-PINN. The method consists of two stages: a short pre-training stage using the W-PINN formulation \ref{W-PINN}, followed by an adaptive stage that refines the selected wavelet families obtained from the pre-training stage. We discuss both stages in detail below.

\begin{itemize}
    \item \emph{Pre-training and family selection}

Consider W-PINN pre-training for the PDE \eqref{eq:1-PDE} with residual points $\{x^r_i\}_{i=1}^{N_{res}}$ and supervised points $\{x^s_i\}_{i=1}^{N_{sup}}$. Evaluate the right side of PDE \eqref{eq:1-PDE} at the residual and supervised training points in vector form

\begin{equation}
\begin{aligned}
    \mathbf{f} &= (f(\boldsymbol{x}_1^r),\ldots,f(\boldsymbol{x}_{N_{res}}^r))^\top \in \mathbb{R}^{N_{res}},\\  
    \mathbf{g} &= (g(\boldsymbol{x}_1^s),\ldots,g(\boldsymbol{x}_{N_{res}}^s))^\top \in \mathbb{R}^{N_{sup}},
\label{eq:rhs}
\end{aligned}
\end{equation}
where $f$ and $g$ denote the PDE residual source term and supervised condition values, respectively. For each wavelet family index $i$ with basis $\Psi_i$ and pre-trained coefficient $c_i$, define the family-induced residual and boundary response vectors
\begin{equation}
\begin{aligned}
    \mathbf{R}_i &= \big(\mathscr{P}[c_i\Psi_i(\boldsymbol{x}_1^r),\ldots,\mathscr{P}[c_i\Psi_i(\boldsymbol{x}_{N_{res}}^r) \big)^\top \in\mathbb{R}^{N_{res}}, \\ 
    \mathbf{B}_i &= \big(\mathscr{B}[c_i\Psi_i(\boldsymbol{x}_1^s),\ldots,\mathscr{B}[c_i\Psi_i(\boldsymbol{x}_{N_{sup}}^s) \big)^\top \in\mathbb{R}^{N_{sup}},
    \label{eq:RB}
\end{aligned}
\end{equation}
then we get the alignment measure between the family $i$ and the PDE right side via the following similarity scores

\begin{equation}
    \text{Score}_i^{R} =
    \frac{\langle \mathbf{R}_i,\ \mathbf{f}\rangle}
         {|\langle \mathbf{R}_i,\ \mathbf{f}\rangle|_{max}}, \qquad
    \text{Score}_i^{B} =
    \frac{\langle \mathbf{B}_i,\ \mathbf{g}\rangle}
         {|\langle \mathbf{B}_i,\ \mathbf{g}\rangle|_{max}},
    \label{eq:similarity}
\end{equation}
When the right-hand side of the PDE is identically zero, the above similarity becomes ill-defined. In such cases, mean of the normalized response vector, $\mathbf{\bar R}_i / \|\mathbf{R}_i\|_{max}$ and $\mathbf{\bar B}_i / \|\mathbf{B}_i\|_{max}$, are used as respective scores. Wavelet families exhibiting a higher similarity score or lower normalized response magnitudes are deemed more physically relevant. Additionally, wavelet families corresponding to the top $\kappa$ largest coefficient magnitudes $|c_i|$ are also included. This similarity-based selection works because the wavelet bases exhibit minimal overlap. The final set of active families is denoted as
\begin{equation}
    \mathcal{I}_{\mathrm{A}} = 
    \Big\{\,(\boldsymbol{j},\boldsymbol{k}) \in \mathbb{Z}^{d}\times\mathbb{Z}^{d}\,\Big\} 
    \subset \mathcal{I},
\end{equation}
along with their corresponding coefficients
\(\boldsymbol{c}_{\mathrm{A}} = \{ c_i \}_{i=1}^{N_{\mathrm{A}}}\), $N_{\mathrm{A}}$ denotes number of selected wavelet basis function.

\begin{figure}[t!] 
    \centering
    \includegraphics[width=1.0\textwidth]{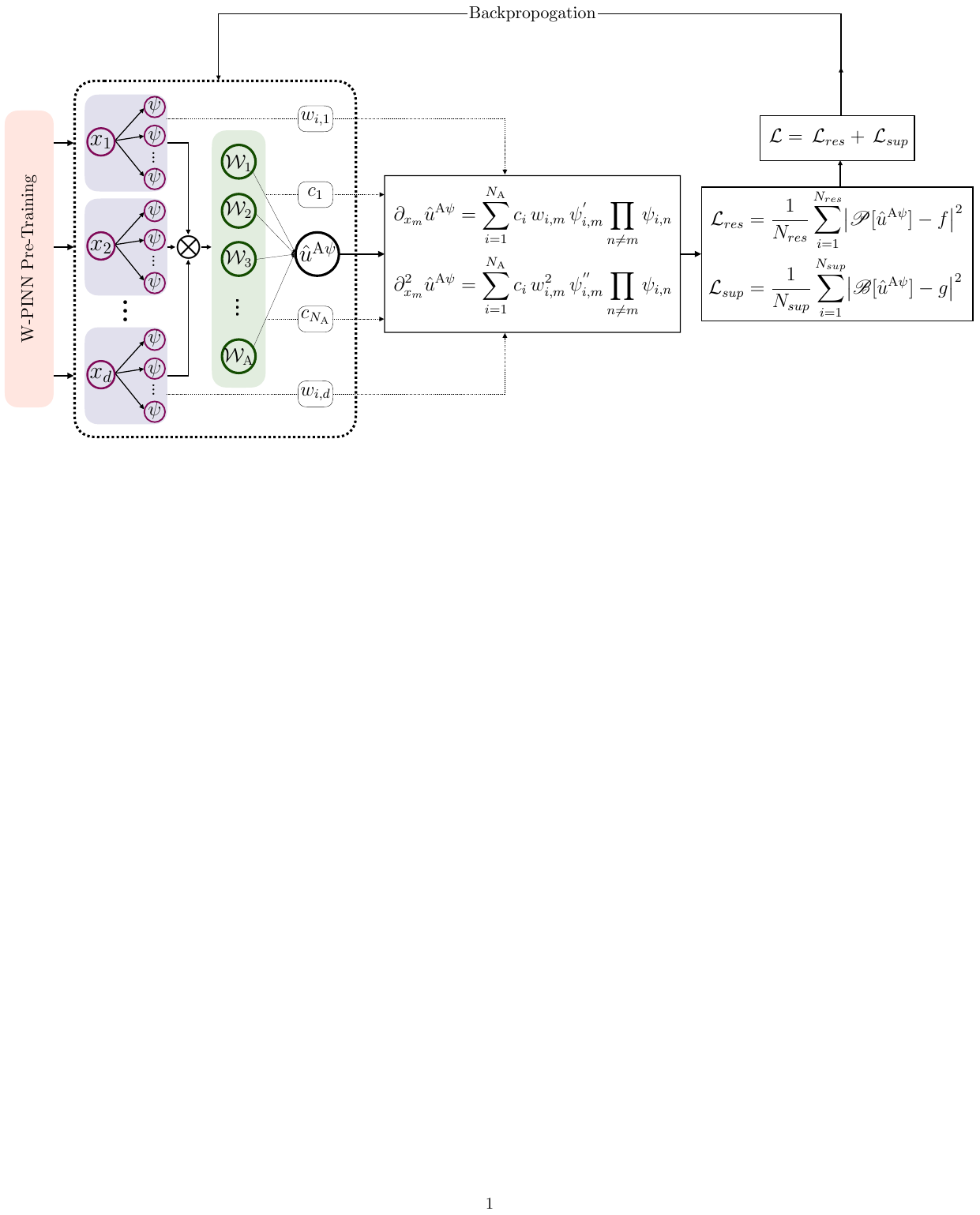} 
    \caption{A schematic architecture of AW-PINN. Parameters for purple wavelet units and final linear layer are initialized using $\mathcal{I_{\text{adapt}}}$ and $\boldsymbol{c}_{\text{adapt}}$, respectively. The symbol $\psi$ denotes the wavelet activation function and $\psi_{i,m} = \psi(w_{i,m}x_m+b_{i,m})$, while $\bigotimes$ represents element-wise multiplication.}   
    \label{Fig:Network}
\end{figure}

\begin{algorithm}[b!]
\caption{Adaptive Wavelet-based PINN (AW-PINN)}
\label{alg:awpinn}
\begin{algorithmic}[1]

\Statex \textbf{Input:} $\mathscr{P},\mathscr{B}$, domain $\Omega$, points $\{x_i^r\},\{x_i^s\}$, wavelet resolutions $\{J_n\}$, translation hyperparameter $\gamma$, threshold $\kappa$.
\vspace{0.2em}
\Statex \textbf{Stage 1: Pre-training and family selection}
\State Build fixed family $\mathcal{I}=\{(\mathbf{j},\mathbf{k}): j_n\in J_n,\; k_n\in K_n(j_n)\}$, as described in \cite{pandey2025efficientwaveletbasedphysicsinformedneural}.
\State Pre-train W-PINN $\hat u^\psi(\boldsymbol{x};\boldsymbol{\theta}) = \sum c_i(\boldsymbol{\theta})\Psi_i(\boldsymbol{x}) + \mathcal{B}$ via loss \eqref{eq:2-PINN-Loss}.
\State For each $i\in\mathcal{I}$ compute responses $\mathbf{R}_i=\mathscr{P}[c_i\Psi_i]$, $\mathbf{B}_i=\mathscr{B}[c_i\Psi_i]$.
\State Compute similarity scores
$\text{Score}^R_i=\langle\mathbf{R}_i,\mathbf{f}\rangle / |\langle\mathbf{R}_i,\mathbf{f}\rangle|_{\max}$,
$\text{Score}^B_i=\langle\mathbf{B}_i,\mathbf{g}\rangle / |\langle\mathbf{B}_i,\mathbf{g}\rangle|_{\max}$.
\State Select active families 
$\mathcal{I}_A:=\text{selection based on score threshold and top }\kappa\text{ coefficients }|c_i|$.
\vspace{0.2em}
\Statex \textbf{Stage 2: Adaptive stage}
\State Initialize adaptive parameters for $i\in\mathcal{I}_A$: 
$w_{i,n}=2^{j_{i,n}},\; b_{i,n}=-k_{i,n}$.
\State Define adaptive wavelet unit $\mathcal{W}_i(\boldsymbol{x};\boldsymbol{\theta}_i)=\prod_{n=1}^d \psi(w_{i,n}x_n+b_{i,n})$.
\State Initialize $c_i$ from pre-training and construct
$\hat u^{\mathrm{A}\psi}(\boldsymbol{x};\boldsymbol{\theta})=\sum_{i\in\mathcal{I}_A} c_i(\boldsymbol{\theta})\,\mathcal{W}_i(\boldsymbol{x};\boldsymbol{\theta}_i) + \mathcal{B}$.
\State Compute analytic derivatives of $\mathcal{W}_i$ for $\mathscr{P}$ and $\mathscr{B}$.
\State Train model parameters $\boldsymbol{\theta}=\{c_i,\boldsymbol{\theta}_i,\mathcal{B}\}$ by minimizing loss in Eq.~\eqref{eq:2-PINN-Loss} using L-BFGS until convergence.

\Statex \textbf{Output:} Trained solution \(\hat u^{A\psi}(\boldsymbol{x}; \boldsymbol{\theta^*})\)

\end{algorithmic}
\end{algorithm}

\item \emph{Adaptive stage and network formulation}

The selected coefficients and scale-translate parameters are utilized to initialize the AW-PINN network. For each selected family $i\in\mathcal{I}_{\mathrm{A}}$ associated with scale–translate pair $(\boldsymbol{j}_i,\boldsymbol{k}_i)$, an adaptive submodule $\mathcal{W}_i(\boldsymbol{x};\boldsymbol{\theta}_i)$ is constructed as a parametric wavelet. For $d$-dimensional input $\boldsymbol{x}=(x_1,\dots,x_d)$, we define
\begin{equation}
    \mathcal{W}_i(\boldsymbol{x};\boldsymbol{\theta}_i)
    = \prod_{n=1}^{d}
    \psi\!\left(w_{i,n}\,x_n + b_{i,n}\right),
    \qquad
    \boldsymbol{\theta}_i = \{(w_{i,n},b_{i,n})\}_{n=1}^{d},
    \label{eq:wavelet-unit}
\end{equation}
where $\psi(\cdot)$ is the same mother wavelet used in the Equation \eqref{eq:wav-1d}. The initial parameters are transferred from \(\mathcal{I_{\mathrm{A}}}\) as, \(w_{i,n}^{(0)} = 2^{j_{i,n}}, b_{i,n}^{(0)} = -k_{i,n},\) enabling continuous adaptation of scale and translation during optimization. This adaptive wavelet layer is achieved by using \(\psi(\cdot)\) as activation function. A final linear layer, initialized with \(\boldsymbol{c}_{\mathrm{A}}\), is imposed to get AW-PINN approximation

\begin{equation}
    \hat u^{\mathrm{A}\psi}(\boldsymbol{x};\boldsymbol{\theta})
    = \sum_{i=1}^{N_{\mathrm{A}}}
      c_i(\boldsymbol{\theta})\,\mathcal{W}_i(\boldsymbol{x};\boldsymbol{\theta}_i)
      + \mathcal{B},
    \label{eq:AWPINN-app}
\end{equation}
where \(\mathcal{B}\) denotes a trainable bias term. The optimization process minimizes the same physics-informed loss as in the Equation \eqref{eq:2-PINN-Loss}, and the autograd is avoided by taking the analytical derivative of the Equation \eqref{eq:wavelet-unit}, similar to the Equation \eqref{eq:derv}. A schematic diagram of the AW-PINN architecture is shown in Fig. \ref{Fig:Network}, and the AW-PINN algorithm is summarized in Algorithm \ref{alg:awpinn}.

\end{itemize}

We now demonstrate that, as the adaptive family grows to an infinitely large size, under certain assumptions, the AW-PINN training converges to a zero-mean Gaussian process \(\mathcal{G}_p\).

\begin{theorem}
    For bias $\mathcal{B}=0$, as $N_{\mathrm{A}}\to\infty$, the random functions $\hat u_N^{\mathrm{A}\psi}(\cdot)$ converge in finite-dimensional distributions to a centered Gaussian process
\[
\hat u_\infty^{\mathrm{A}\psi}(\cdot)\ = \frac{1}{\sqrt {N_\mathrm{A}}} \sum_{i=1}^{N_{\mathrm{A}}}
      c_i\mathcal{W}_i\sim\ \mathcal{G}_p\!\big(0,k(\cdot,\cdot)\,\big)
\]
\end{theorem}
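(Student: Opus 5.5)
We will use the standard Neal-type central limit argument for one-hidden-layer networks, taking the "certain assumptions" to be those natural for the AW-PINN architecture in \eqref{eq:AWPINN-app}. Concretely, we assume the following at initialization:
\begin{itemize}
\item the outer coefficients $c_i$ are i.i.d. with mean zero and finite variance $\sigma_c^2$;
\item the inner parameters $\boldsymbol{\theta}_i=\{(w_{i,n},b_{i,n})\}_{n=1}^d$ are i.i.d. across $i$ and independent of the $c_i$;
\item the mother wavelet $\psi$ is bounded (true for the Mexican hat, Gaussian-derivative and Morlet-type wavelets used in W-PINN), or at least $\mathbb{E}\big[\mathcal{W}_i(\boldsymbol{x};\boldsymbol{\theta}_i)^2\big]<\infty$ for every $\boldsymbol{x}$.
\end{itemize}
With $\mathcal{B}=0$ and the $1/\sqrt{N_{\mathrm A}}$ normalization, $\hat u_N^{\mathrm{A}\psi}(\boldsymbol{x})=N_{\mathrm A}^{-1/2}\sum_{i} Z_i(\boldsymbol{x})$ with $Z_i(\boldsymbol{x})=c_i\mathcal{W}_i(\boldsymbol{x};\boldsymbol{\theta}_i)$. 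The $Z_i(\cdot)$ are i.i.d. random functions.

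The first step is to compute the first two moments of a single summand. By independence of $c_i$ and $\boldsymbol{\theta}_i$, $\mathbb{E}[Z_i(\boldsymbol{x})]=\mathbb{E}[c_i]\,\mathbb{E}[\mathcal{W}_i(\boldsymbol{x})]=0$. The covariance is
\[
k(\boldsymbol{x},\boldsymbol{x}')=\mathbb{E}[Z_i(\boldsymbol{x})Z_i(\boldsymbol{x}')]=\sigma_c^2\,\mathbb{E}_{\boldsymbol{\theta}}\Big[\prod_{n=1}^d\psi(w_nx_n+b_n)\,\psi(w_nx_n'+b_n)\Big].
\]
This factorizes as $\sigma_c^2\prod_{n}\mathbb{E}[\psi(w_nx_n+b_n)\psi(w_nx_n'+b_n)]$ if the coordinates of $\boldsymbol{\theta}_i$ are independent across $n$. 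Finiteness of $k$ follows from the boundedness of $\psi$ and Cauchy--Schwarz. The kernel $k$ is symmetric and positive semidefinite, since it is a second-moment kernel.

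The second step proves convergence of finite-dimensional distributions. Fix points $\boldsymbol{x}^{(1)},\dots,\boldsymbol{x}^{(m)}\in\Omega\cup\partial\Omega$ and form the random vectors $\mathbf{Z}_i=(Z_i(\boldsymbol{x}^{(1)}),\dots,Z_i(\boldsymbol{x}^{(m)}))^\top$. These are i.i.d. in $\mathbb{R}^m$ with mean zero and covariance matrix $K=[k(\boldsymbol{x}^{(p)},\boldsymbol{x}^{(q)})]_{p,q}$. The multivariate Lindeberg--L\'evy CLT then gives $N_{\mathrm A}^{-1/2}\sum_{i=1}^{N_{\mathrm A}}\mathbf{Z}_i\Rightarrow\mathcal{N}(0,K)$. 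Equivalently, one can apply the Cram\'er--Wold device to $\sum_p\alpha_pZ_i(\boldsymbol{x}^{(p)})$ and use the scalar CLT. Since $m$ and the points are arbitrary, and the limiting laws $\mathcal{N}(0,K)$ form a consistent family, Kolmogorov's extension theorem yields a centered process $\hat u_\infty^{\mathrm{A}\psi}\sim\mathcal{G}_p(0,k)$ whose finite-dimensional laws are these limits. Because derivatives are computed analytically through $\psi'$ and $\psi''$ (cf. \eqref{eq:derv}), the same argument applies verbatim to $\mathscr{P}[\hat u_N^{\mathrm{A}\psi}]$ and $\mathscr{B}[\hat u_N^{\mathrm{A}\psi}]$ when $\mathscr{P},\mathscr{B}$ are linear. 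This requires only the extra moment condition $\mathbb{E}[w_n^{2r}]<\infty$ for derivatives of order $r$, and gives the joint Gaussian limit needed for the NTK blocks in Section \ref{NTK}.

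The main obstacle is not the CLT itself but justifying the i.i.d. structure. In Algorithm \ref{alg:awpinn} the parameters are initialized deterministically from pre-training, with $w_{i,n}^{(0)}=2^{j_{i,n}}$, $b_{i,n}^{(0)}=-k_{i,n}$, and $c_i$ taken from Stage 1, so they are neither random nor identically distributed. We would handle this by making the assumption explicit. One option is to model $(\boldsymbol{j}_i,\boldsymbol{k}_i)$ as i.i.d. draws from a sampling distribution on the index set, possibly with random perturbations. The other is to treat the $c_i$ as independent symmetric random signs times magnitudes. If identical distribution must be dropped, we would instead invoke the Lindeberg--Feller CLT, whose Lindeberg condition is satisfied by the uniform bound $|Z_i|\le|c_i|\,\|\psi\|_\infty^d$ together with a uniform $(2+\delta)$-moment bound on the $c_i$. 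In that case $k$ is defined as the limit of the averaged covariances $N_{\mathrm A}^{-1}\sum_i\mathbb{E}[Z_i(\boldsymbol{x})Z_i(\boldsymbol{x}')]$, which we would assume exists.
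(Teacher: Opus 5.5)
Your proposal is correct and follows the same route as the paper. Both fix finitely many inputs and treat $N_{\mathrm A}^{-1/2}\sum_i c_i\boldsymbol{\Phi}_i$ as a normalized sum of i.i.d. mean-zero random vectors with covariance $\sigma_c^2\,\mathbb{E}_{\boldsymbol{\theta}}[\boldsymbol{\Phi}\boldsymbol{\Phi}^{\top}]$, then apply the multivariate CLT to obtain the centered Gaussian process with kernel $\sigma_c^2\,\mathbb{E}_{\boldsymbol{\theta}}[\mathcal{W}(\boldsymbol{x};\boldsymbol{\theta})\mathcal{W}(\boldsymbol{x}';\boldsymbol{\theta})]$. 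Your version is somewhat more general and more explicit than the paper's. The paper assumes Gaussian $c_i$ and $\boldsymbol{\theta}_i$, and leaves three things implicit that you state: independence of $c_i$ from $\boldsymbol{\theta}_i$, finite second moments (via your boundedness of $\psi$), and the Kolmogorov consistency step. Your caveat about deterministic pre-trained initialization matches the paper's own remark after the proof.
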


\begin{proof} Fix $m\in\mathbb{N}$ and inputs $X=\{\boldsymbol{x}^{(1)},\ldots,\boldsymbol{x}^{(m)}\}\subset\mathbb{R}^d$. Define
\[
\mathbf{f}_{N_{\mathrm{A}}} \;=\; \big(\hat u_{N_{\mathrm{A}}}^{\mathrm{A}\psi}(\boldsymbol{x}^{(1)}),\ \ldots,\ \hat u_{N_{\mathrm{A}}}^{\mathrm{A}\psi}(\boldsymbol{x}^{(m)})\big)^\top
=\frac{1}{\sqrt{N_{\mathrm{A}}}}\sum_{i=1}^{N_{\mathrm{A}}} c_i\,\boldsymbol{\Phi}_i,
\]
where the random feature vector
\[
\boldsymbol{\Phi}_i \;=\; \big(\mathcal{W}_i(\boldsymbol{x}^{(1)};\boldsymbol{\theta}_i),\ldots,
\mathcal{W}_i(\boldsymbol{x}^{(m)};\boldsymbol{\theta}_i)\big)^\top\in\mathbb{R}^m.
\]

Assume parameters and coefficients are initialized i.i.d. normal distribution, $c_i \stackrel{\text{i.i.d.}}{\sim}\mathcal{N}(0,\sigma_c^2),~\theta_{i}\stackrel{\text{i.i.d.}}{\sim}\mathcal{N}(0,\sigma_\theta^2),$ where $\sigma_c$ and $\sigma_\theta$ are fixed constants, then
\[
 \text{Cov}~[c_i\,\boldsymbol{\Phi}_i]=\sigma_c^2~\mathbb{E}_{\boldsymbol{\theta}}[\boldsymbol{\Phi}_i\,\boldsymbol{{\Phi}_i^{\top}}],\]
whose \((m,n)\)-th entry is \(\sigma_c^2\,\mathbb{E}_{\boldsymbol{\theta}}\,[\mathcal{W}(\boldsymbol{x^{(m)}};\boldsymbol{\theta})\mathcal{W}(\boldsymbol{x^{(n)}};\boldsymbol{\theta})].\)

By the multivariate central limit theorem, \(\mathbf{f}_{N_{\mathrm{A}}}\) converges in distribution to a multivariate Gaussian with mean zero and covariance matrix equal to that limit covariance. Because the finite dimensional distributions converge to Gaussian ones with the above covariance, the process converges to the centered \(\mathcal{G}_p\) with kernel \(\ \mathcal{K}(\boldsymbol{x},\boldsymbol{x}')=\sigma_c^2\ \mathbb E_{\boldsymbol{\theta}}\big[\mathcal W(\boldsymbol{x};\boldsymbol{\theta})\,
\mathcal W(\boldsymbol{x}';\boldsymbol{\theta})\big]\). 
\end{proof}

Although, for practical purposes, the AW-PINN parameters are transferred effectively from W-PINN pre-training, the above theorem applies under random initialization as stated. The above theorem suggests the learning dynamics of AW-PINN in the infinite family regime may be analyzed via kernel regression and neural tangent kernel theory as in Section \ref{NTK}. For a scalar output \(\hat u^{\mathrm{A}\psi}\), the empirical NTK between inputs \(x,x'\) is
\begin{equation}
\mathcal{K}(\boldsymbol{x},\boldsymbol{x'}) \;=\; \big\langle \nabla_\theta \hat u^{\mathrm{A}\psi}(\boldsymbol{x};\boldsymbol{\theta}),\; \nabla_\theta \hat u^{\mathrm{A}\psi}(\boldsymbol{x'};\boldsymbol{\theta})\big\rangle,
\end{equation}
where the inner product is over all trainable scalar parameters. Decompose the gradient into derivatives with respect to the linear coefficients \(c_i\) and the wavelet parameters \(\theta_i\). Using the model and the \(1/\sqrt{N_A}\) scaling,
\begin{align}
\frac{\partial u^{\mathrm{A}\psi}}{\partial c_i} &= \frac{1}{\sqrt{N_A}}\,\mathcal{W}_i(\boldsymbol{x};{\theta}_i),\\
\frac{\partial u^{\mathrm{A}\psi}}{\partial \theta_i} &= \frac{1}{\sqrt{N_A}}\,c_i\,\nabla_{\varphi}\mathcal{W}_i(\boldsymbol{x};{\theta}_i).
\end{align}

Hence, the NTK decomposes as a sum over units
\begin{equation}
\mathcal{K}(\boldsymbol{x},\boldsymbol{x'}) = \frac{1}{N_A}\sum_{i=1}^{N_A} \mathcal{W}_i(\boldsymbol{x}; \theta_i)\mathcal{W}_i(\boldsymbol{x'}; \theta_i)
+ \frac{1}{N_A}\sum_{i=1}^{N_A} c_i^2 \, \big\langle \nabla_\theta \mathcal{W}_i(\boldsymbol{x}; \theta_i), \, \nabla_\theta \mathcal{W}_i(\boldsymbol{x'}; \theta_i) \big\rangle,
\end{equation}
we denote the two contributions by \(\mathcal{K}_c\) (from linear weights \(c_i\)) and \(\mathcal{K}_\theta\) (from adaptive parameters):
\[
\mathcal{K}(\boldsymbol{x},\boldsymbol{x'})=\mathcal{K}_c(\boldsymbol{x},\boldsymbol{x'})+\mathcal{K}_\theta(\boldsymbol{x},\boldsymbol{x'}).
\]


\section{Results}\label{sec: Results}

This section evaluates the performance of AW-PINN on several challenging problems and compares with baseline PINN, W-PINN, and MMPINN, all of which are designed to address the loss imbalance issue in PINNs. For W-PINN and MMPINN, training starts with a few epochs of the ADAM optimizer, followed by the LBFGS optimizer until convergence. In the case of AW-PINN, the first phase is done via ADAM, and the subsequent adaptive phase employs the LBFGS optimizer. In this study, we use the Gaussian wavelet, which is the first derivative of the Gaussian function. The smooth regularity and localization property of the Gaussian wavelet make it one of the optimal choices. All model parameters are initialized using Xavier's technique \cite{pmlr-v9-glorot10a}, which prevents vanishing gradients and provides stable and efficient training. The hyperparameters used for each problem are tabulated in Appendix A \ref{A1}. For quantitative comparison, we use the relative $L_2$-Error metric over uniformly distributed test samples across the domain, which is defined as
\begin{equation}
\text{Relative $L_2$-Error}= \frac{||u(\boldsymbol{x}^t)-\hat u(\boldsymbol{x}^t;\boldsymbol{\theta})||_2}{||u(\boldsymbol{x}^t)||_2},
\end{equation}
where $u$ and $\hat u$ are the exact/reference solution and the model's prediction over test samples $\boldsymbol{x}^t$, respectively. Further, to average out the model's sensitivity over parameter initialization, each experiment is repeated five to ten times independently, and reported results correspond to the average outputs of these runs. Training is carried out using PyTorch 2.6.0 + CU12.4 on an NVIDIA RTX A6000 GPU. All source code to reproduce the results of this study will be made available on request.

\subsection{Heat conduction problem with extreme heat source}\label{problem:1}

Heat conduction problems with a localized, strong heat source are often encountered in various practical applications, such as material processing, geophysics, nuclear engineering, and many others. For such problems, challenges lie in the transient nature and the intense gradients generated in the process. This problem is chosen to test the effectiveness of the proposed method for such an extreme source.  

Consider the following heat conduction model 

{ \begin{equation}
        \begin{cases}
             \frac{\partial u}{\partial t}=\frac{\partial^2 u}{\partial x^2}+h(x,t), \quad (x,t)\in(-1,1)\times(0,1), \\[6pt]
        u(x,0)=(1-x^2)\exp\big(1/(1+\epsilon)\big), \quad x\in(-1,1), \\[6pt] 
        u(-1,t)=0,~u(1,t)=0, \quad t\in (0, 1],
        \end{cases}
    \end{equation}}
the exact solution is given as $u(x,t) = (1-x^2)\exp\big(1/{((2t-1)^2+\epsilon})\big)$, for which the source term becomes

\begin{flushleft}
\(
   \quad \quad h(x,t) = 2\Big[ 1 + 2\frac{(2t-1)(x^2-1)}{((2t-1)^2+\epsilon)^2}\Big]\exp\Big(\frac{1}{(2t-1)^2+\epsilon}\Big).
\)
\end{flushleft}

For the lower $\epsilon$, the source term as well as the solution exhibit transient behavior with a high gradient near $t=0.5$. Additionally, due to smooth initial and boundary conditions, it poses a highly loss imbalance. For instance, at $\epsilon = 0.1$, the initial loss ratio is $\mathcal{L}_b:\mathcal{L}_i:\mathcal{L}_r = 1:10:10^9$, which makes it challenging for any PINNs-based approaches.

\begin{table}[t!]
\centering
\caption{Comparison of various methods for solving the heat conduction problem~\ref{problem:1}.}
\label{Table:P1}
\small
{\begin{tabular}{cccccccc}
\hline
$\epsilon$ & Method & $N_I$ & $N_B$ & $N_R$ & Relative $L_2$-Error & Avg. Training Time \\
\hline \\

 & Baseline PINN & 5000 & 10000 & 50000 & $8.1 \pm 1.32 \times 10^{-1}$ & - \\
 & W-PINN & 1000 & 2000 & 20000 & $4.05 \pm 0.38 \times 10^{-5}$ & 15.35 min \\
 0.12   & MMPINN & 3000 & 6000 & 90000 & $1.71 \pm 0.53 \times 10^{-5}$ & 91.21 min \\
& \textbf{AW-PINN} & \textbf{1000} & \textbf{2000} & \textbf{20000} & $\mathbf{3.52 \pm 0.81 \times 10^{-6}}$ & \textbf{24.83 min} \\

\\ 

 & Baseline PINN & 5000 & 10000 & 50000 & $9.13 \pm 0.82 \times 10^{-1}$ & - \\
 & W-PINN & 1000 & 2000 & 20000 & $3.11 \pm 0.63 \times 10^{-5}$ & 16.79 min \\
 0.11   & MMPINN & 4000 & 8000 & 100000 & $2.44 \pm 0.61 \times 10^{-5}$ & 208.36 min \\
& \textbf{AW-PINN} & \textbf{1000} & \textbf{2000} & \textbf{20000} & $\mathbf{8.15 \pm 1.27 \times 10^{-6}}$ & \textbf{24.14 min} \\

\\

 & Baseline PINN & 5000 & 10000 & 100000 & $9.61 \pm 0.76 \times 10^{-1}$ & - \\
 & W-PINN & 1000 & 2000 & 20000 & $3.10 \pm 0.86 \times 10^{-5}$ & 16.28 min \\
 0.10   & MMPINN & 7000 & 14000 & 200000 & $8.46 \pm 0.68 \times 10^{-1}$ & - \\
& \textbf{AW-PINN} & \textbf{1000} & \textbf{2000} & \textbf{20000} & $\mathbf{8.86 \pm 0.87 \times 10^{-6}}$ & \textbf{25.52 min} \\

\hline
\end{tabular}
}
\end{table}

\begin{figure}[b!] 
    \centering
    \includegraphics[width=1.0\textwidth]{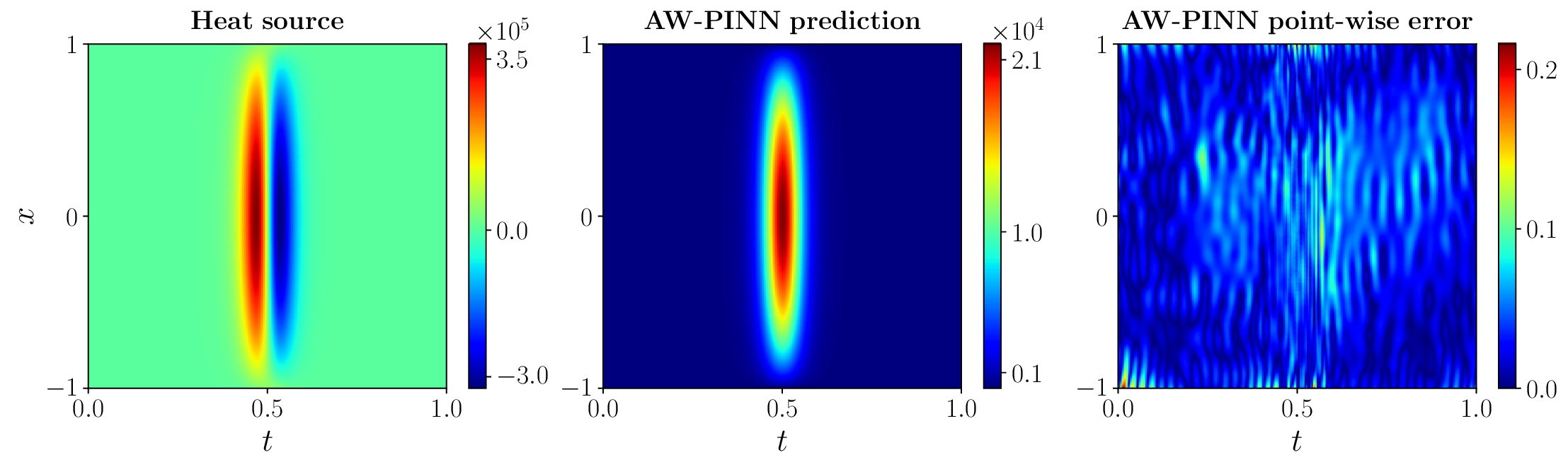}
    \caption{\textit{Left to Right:} Heat source function, the prediction using AW-PINN and respective absolute point-wise error for problem \ref{problem:1} with $\epsilon=0.1$.}
    \label{Fig:P1_plot}
\end{figure}

\begin{figure}[t!] 
    \centering
    \includegraphics[width=0.9\textwidth]{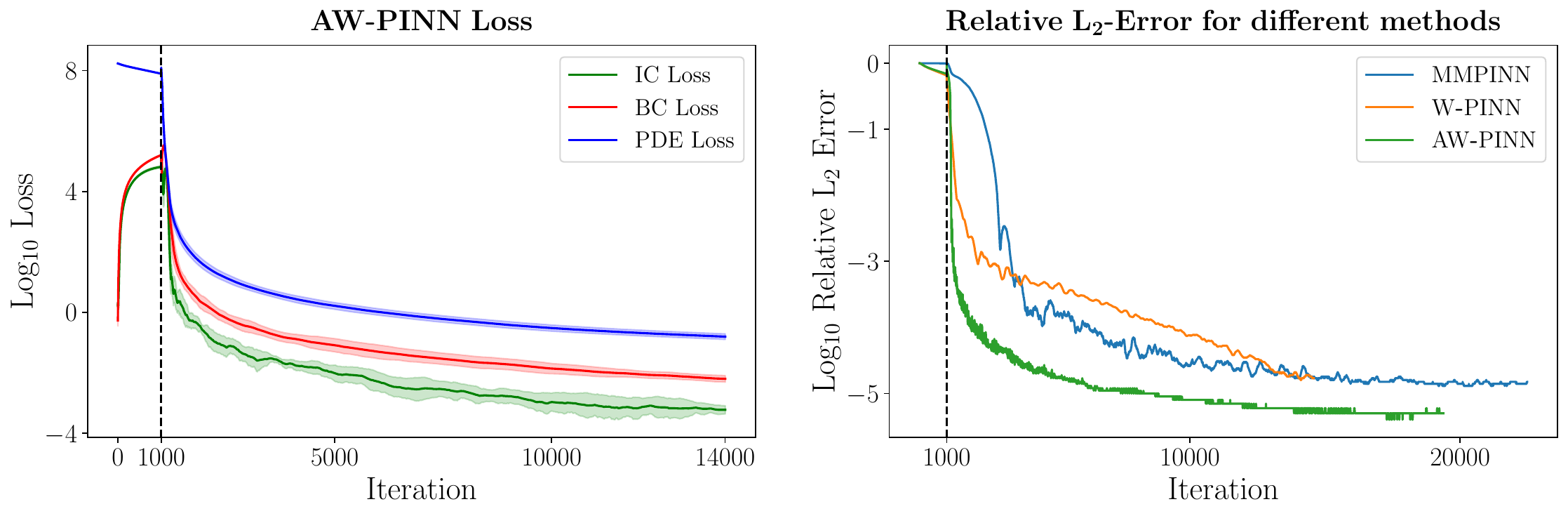}
    \caption{\textit{Left:} Average log loss plots for each loss terms in problem \ref{problem:1} with $\epsilon=0.12$ using AW-PINN. Shaded regions denote the standard deviation across $10$ independent runs. The vertical dashed line marks the completion of pre-training with W-PINN using $1000$ Adam iterations.\\
    \textit{Right:} Log relative $L_2$-error plots of different methods for problem \ref{problem:1} with $\epsilon=0.12$. The vertical dashed line indicates the end of the pre-training phase with $1000$ Adam iterations.}
    \label{Fig:P1_Loss}
\end{figure}

In our experiments, we employ $5000$ Adam iterations (approximately two minutes of pre-training) using the W-PINN for pre-training, followed by L-BFGS optimization with AW-PINN until convergence. Table \ref{Table:P1} lists comparative results for several small $\epsilon$ values. It includes the relative $L_2$-error, average training time, and number of collocation points used in training. Training times are omitted for methods with ineffective convergence. The table demonstrates AW-PINN achieves the lowest relative $L_2$-error for all $\epsilon$ values, with substantially reduced training time compared to MMPINN. Specifically for $\epsilon=0.1$, only AW-PINN could solve the problem satisfactorily. Another interesting observation is that, unlike MMPINN, AW-PINN does not require additional collocation points with a decrease in $\epsilon$ values. Furthermore, for $\epsilon=0.1$, extremely transient behavior of the heat source is evident from Fig. \ref{Fig:P1_plot}, and the consistently low point-wise absolute error across the domain demonstrates the effectiveness of the AW-PINN. Moreover, Fig. \ref{Fig:P1_Loss} shows the loss curve of AW-PINN and also compares the relative $L_2$-error with other methods for $\epsilon=0.12$. The narrow shaded regions for each loss term suggest low variance in the AW-PINN model, while the relative $L_2$-error comparison clearly demonstrates the faster convergence achieved by AW-PINN. Additionally, the non-oscillatory error curve indicates that AW-PINN provides more stable training dynamics than MMPINN.


\begin{table}[b!]
\centering
\caption{Comparison of various methods for solving the Poisson problem~\ref{problem:2}.}
\label{Table:P2}
\small
{\begin{tabular}{ccccccc}
\hline
$\epsilon$ & Method & $N_B$ & $N_R$ & Relative $L_2$-Error & Avg. Training Time \\
\hline \\

 & W-PINN & 4000 & 10000 & $3.55 \pm 0.26 \times 10^{-4}$ & 4.87 min \\
 0.05   & MMPINN & 6000 & 30000 & $5.71 \pm 1.53 \times 10^{-4}$ & 7.17 min \\
& \textbf{AW-PINN} & \textbf{4000} & \textbf{10000} & $\mathbf{3.42 \pm 0.13 \times 10^{-5}}$ & \textbf{5.14 min} \\

\\

 & W-PINN & 4000 & 15000 & $9.81 \pm 1.72 \times 10^{-3}$ & 4.21 min \\
 0.02   & MMPINN & 8000 & 50000 & $3.25 \pm 1.16 \times 10^{-3}$ & 9.11 min \\
& \textbf{AW-PINN} & \textbf{4000} & \textbf{10000} & $\mathbf{2.68 \pm 0.53 \times 10^{-4}}$ & \textbf{6.41 min} \\

\hline
\end{tabular}
}
\end{table}

\subsection{Poisson problem with highly localized source term}\label{problem:2}
In this test case, we examine a two-dimensional Poisson equation with an extremely localized source term. Such equations frequently arise in modeling electrostatic potentials with point charges and concentration profiles in chemical diffusion processes. They are also relevant in fields such as materials science and geophysics, where localized phenomena must be accurately resolved. Consider the following Poisson model:

{ \begin{equation}
        \begin{cases}
             \frac{\partial^2 u}{\partial x^2} + \frac{\partial^2 u}{\partial y^2}=f(x,y),\quad(x,y)\in(0,1)\times(0,1), \\[6pt]
            \mathscr{B}(x,y) = g(x,y),\quad (x,y)\in\partial \Omega,
        \end{cases}
\end{equation}}
with the exact solution as

\begin{flushleft}
\(
    \quad \quad \quad u(x,y) = 1+(y^2 + 10^3)\exp\left(-\frac{(x-0.5)^2}{2\epsilon^2}\right),
\)
\end{flushleft}
for small $\epsilon$ values, both the source function and the solution are highly localized, as illustrated in the top panel of Fig. \ref{Fig:P2_plot} for $\epsilon=0.02$.  This strong localization results in a huge loss imbalance. For example, at $\epsilon=0.02$ the initial loss ratio is $\mathcal{L}_b:\mathcal{L}_r=10^4:10^{11}$, making it a challenging scenario for the PINN-based methods.

Table \ref{Table:P2} compares the performance of various methods for $\epsilon=0.05$ and $\epsilon=0.02$. It demonstrates that the AW-PINN achieves relative $L_2$-error that is one and two orders of magnitude lower than those obtained by MMPINN and W-PINN, respectively. The same can be observed in Fig. \ref{Fig:P2_plot} bottom panel, where AW-PINN exhibits the lowest absolute point-wise error across the domain. Moreover, Fig. \ref{Fig:P2_scale} illustrates the scale adaptation capability of AW-PINN. In the case of W-PINN, it requires a highly resolved wavelet basis and corresponding matrices to handle such problems with localized high-scale features, which often need intensive memory and lead to highly non-convex optimization. Whereas AW-PINN efficiently adapts to higher scales only in the region where needed, without populating the entire domain with a high-resolution basis function. This not only improves computational efficiency but also enhances training stability and accuracy.

\begin{figure}[t!] 
    \centering
    \includegraphics[width=0.9\textwidth]{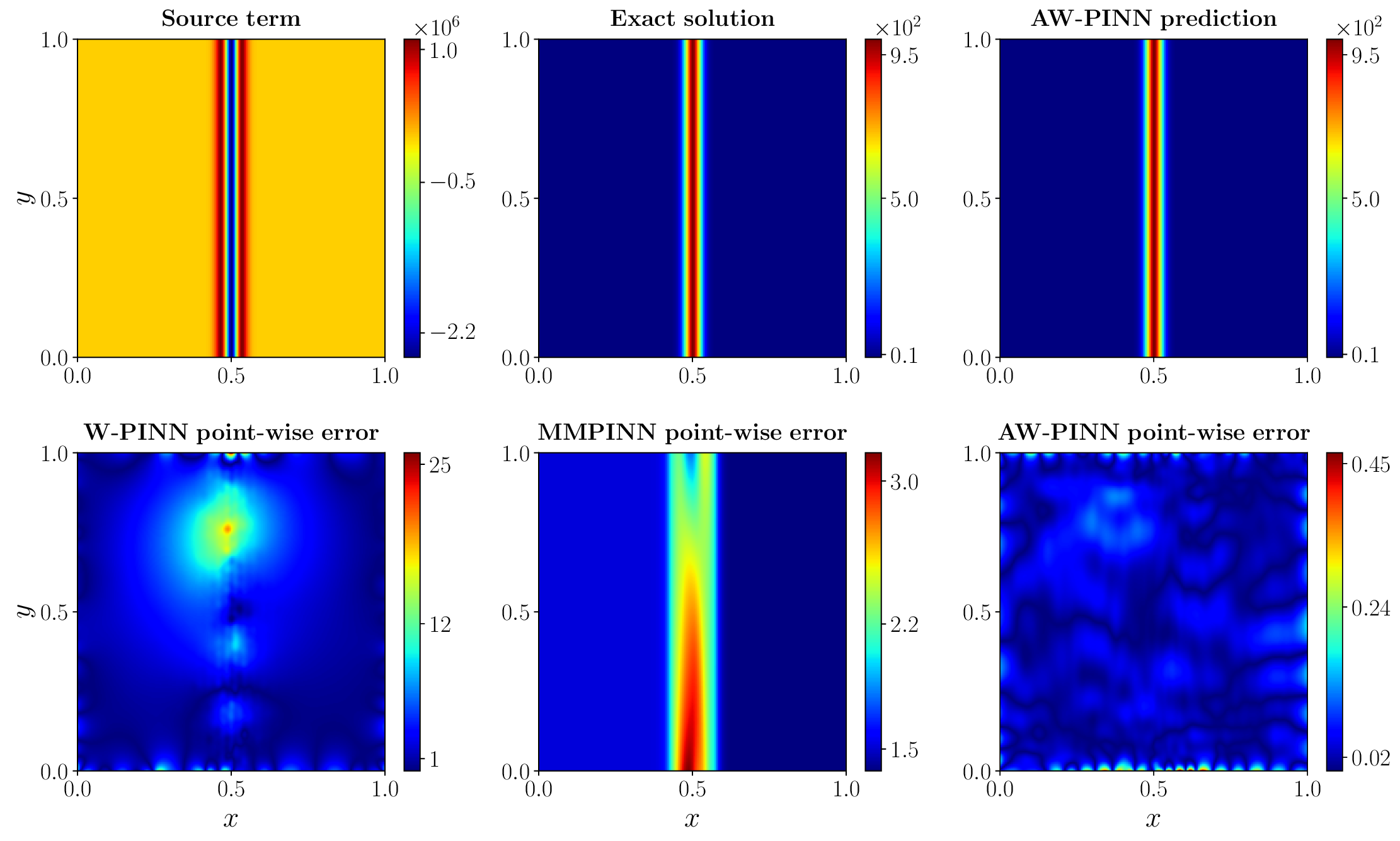} 
    \caption{\textit{Top - Left to Right:} The source function, exact solution, and prediction using AW-PINN for problem \ref{problem:2} with $\epsilon=0.02$.\\
    \textit{Bottom - Left to Right:} Absolute point-wise error of W-PINN, MMPINN, and AW-PINN, respectively.}
    \label{Fig:P2_plot}
\end{figure}

\begin{figure}[t!] 
    \centering
    \includegraphics[width=0.8\textwidth]{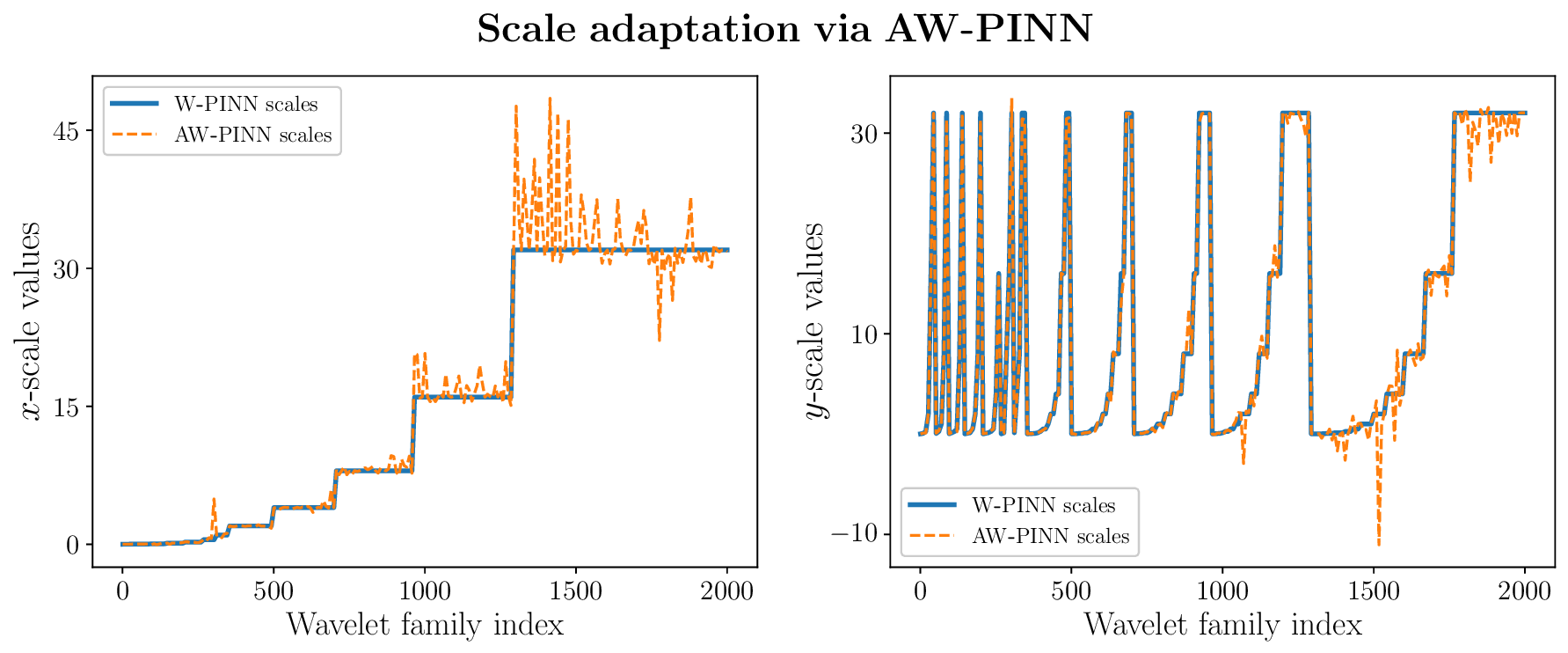} 
    \caption{Plot of x-scale (\textit left) and y-scale (\textit right) adaptation for problem \ref{problem:2}. Here, solid lines denote dyadic scales and dashed lines represent scales after adaptation.}
    \label{Fig:P2_scale}
\end{figure}


\subsection{Flow Equation with a strong oscillating source term}\label{problem:3}

The flow equation arises in a wide range of practical applications. For example, conservation laws in non-inertial reference frames, such as flows around airfoils, two-phase flow models incorporating momentum and energy exchange, turbulence models governed by two-equation formulations, and reactive flows involving chemical processes. For this example, we are interested in the following flow model with a large oscillating source term

{ \begin{equation}
        \begin{cases}
             \frac{\partial u}{\partial t} + \frac{\partial u}{\partial x} = A \sin \big(2\pi \frac{t}{T_s}\big),\quad(x,t)\in(-1,1)\times(0,1), \\[6pt]
            u(x,0) = \sin(\pi x),\quad x\in (-1,1),
        \end{cases}
\end{equation}}
with exact solution
\begin{flushleft}
\(
    \quad \quad \quad u(x,t) = \sin(\pi(x-t)) + \frac{AT_s}{2\pi}\Big(1-\cos\big(2\pi \frac{t}{T_s}\big)\Big),
\)
\end{flushleft}
here we take $A=100$ and $T_s = 0.05$. A small $T_s$ makes the source scale much smaller than the mean flow time scale, and a large $A$ strengthens the oscillations of the source term. These characteristics make this flow equation a challenging problem.

The comparison Table \ref{Table:P3} suggests that AW-PINN achieves accuracy superior to other methods by two orders of magnitude. The top panel of Fig. \ref{Fig:P3_sol} shows a highly oscillatory source term, AW-PINN prediction, and the corresponding absolute point-wise error, demonstrating the model's efficiency. The bottom panel further validates the accuracy of the prediction through multiple spatial snapshot plots.

\begin{table}[t!]
\centering
\caption{Comparison of various methods for solving the flow equation~\ref{problem:3}.}
\label{Table:P3}
\small
{\begin{tabular}{cccccccc}
\hline
Method & $N_I$ & $N_B$ & $N_R$ & Relative $L_2$-Error & Avg. Training Time \\
\hline \\

W-PINN & 1000 & 1000 & 20000 & $1.27 \pm 0.15 \times 10^{-2}$ & 4.17 min \\
MMPINN & 2000 & 2000 & 50000 & $8.35 \pm 0.97 \times 10^{-2}$ & 7.21 min \\
\textbf{AW-PINN} & \textbf{1000} & \textbf{1000} & \textbf{20000} & $\mathbf{5.17 \pm 0.61 \times 10^{-4}}$ & \textbf{6.17 min} \\

\hline
\end{tabular}
}
\end{table}

\begin{figure}[t!] 
    \centering
    \includegraphics[width=0.9\textwidth]{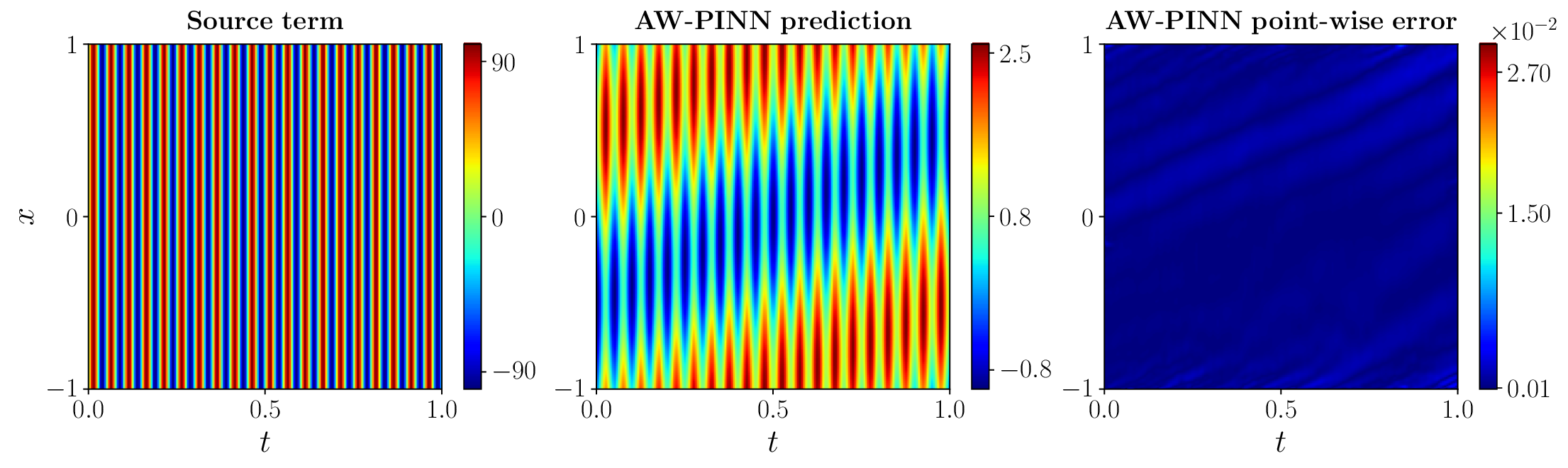} 

    \centering
    \includegraphics[width=0.7\textwidth]{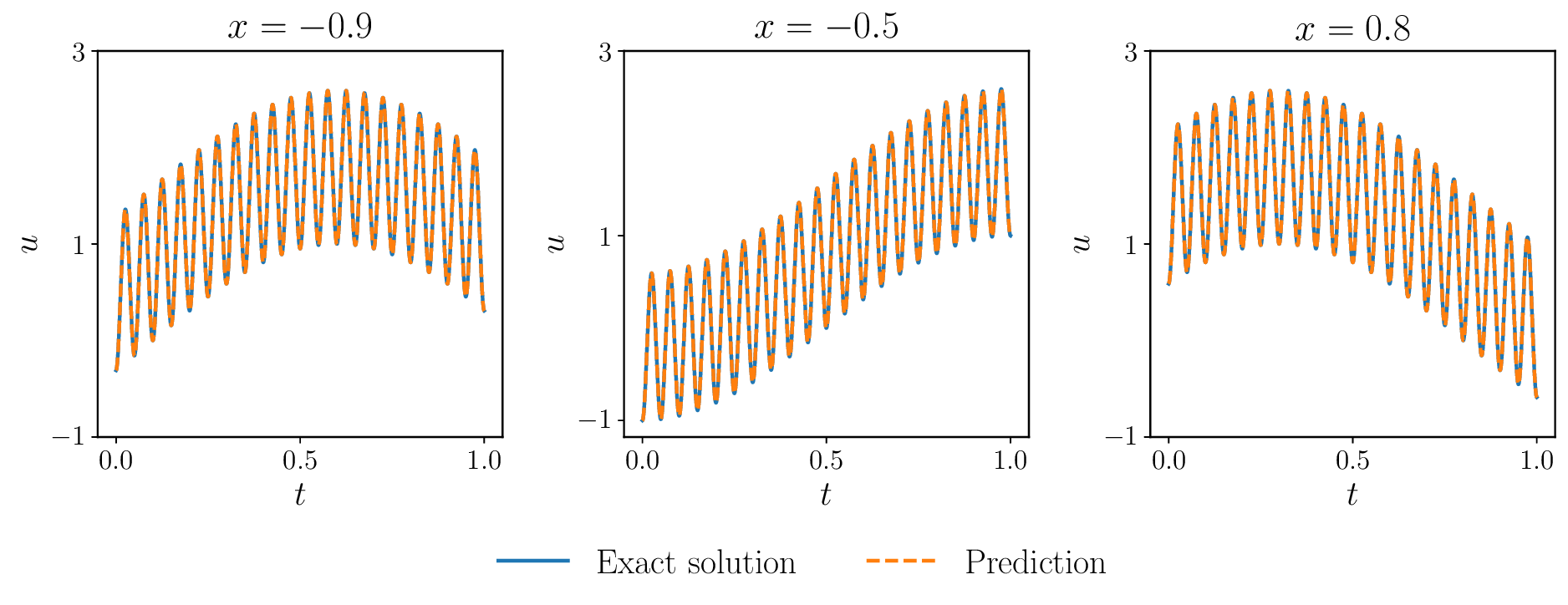} 
    \caption{\textit{Top - Left to Right:} The source function, the prediction using AW-PINN, and the corresponding absolute point-wise error for problem \ref{problem:3}.\\
    \textit{Bottom:} Cross-section comparison of the prediction with the exact solution at various $x$-domain snapshots.}
    \label{Fig:P3_sol}

\end{figure}


\subsection{Maxwell's equation for a point charge source}\label{problem:4}

In this example, we apply AW-PINN on a three-dimensional (one temporal and two spatial) electromagnetic problem. The fundamental principles of electromagnetism are governed by Maxwell's equations. It models the electric and magnetic fields through a set of coupled partial differential equations. These equations are widely used in various fields of physics and engineering, including wireless communications and antenna design, microwave devices, radar and remote sensing, photonics and optical waveguides, and biomedical imaging.

To validate the robustness of the proposed method, we consider the TEz formulation of Maxwell’s equations in a rectangular cavity with perfectly electrically conducting (PEC) boundary conditions

\begin{equation}
    \begin{aligned}
            \frac{\partial E_x}{\partial t} &= \frac{1}{\epsilon_o} \frac{\partial H_z}{\partial y}, \\
            \frac{\partial E_y}{\partial t} &= -\frac{1}{\epsilon_o} \frac{\partial H_z}{\partial x}, \\
            \frac{\partial H_z}{\partial t} &= -\frac{1}{\mu_o} \left( \frac{\partial E_y}{\partial x} - \frac{\partial E_x}{\partial y} + S \right),
    \end{aligned}
\end{equation}
where $E_x$ and $E_y$ denote the in-plane electric field components and $H_z$ denotes the out-of-plane magnetic field component. The constants $\epsilon_o$ and $\mu_o$ are the permeability and permittivity of the space, respectively. The source function $S$ represents a Gaussian pulse emitted by a point source located at $(x_o,y_o)$, which can be expressed as

\begin{equation}
    S(x,y,t) = \delta(x-x_o)\delta(y-y_o)\cdot \exp\Big(-\Big(\frac{t-\tau}{\omega}\Big)^2\Big),
\end{equation}
with temporal delay $\tau$ and pulse width $\omega$. The spatial point source is approximated using the Gaussian distribution centered at $(x_o,y_o)$ with kernel width $\sigma=0.01$. 
\begin{equation}
   \delta(x - x_0)\, \delta(y - y_0) \approx \frac{1}{2\pi \sigma^2} \exp\left( -\frac{(x-x_0)^2 + (y-y_0)^2}{2\sigma^2} \right). 
\end{equation}

\begin{table}[b!]
\centering
\caption{Comparison of various methods for solving Maxwell's equation~\ref{problem:4}.}
\label{Table:P4}
\small
{\begin{tabular}{cccccccc}
\hline
Method & Relative $L_2$-Error $E_x$ & Relative $L_2$-Error $E_y$ & Relative $L_2$-Error $H_z$ & Avg. Training Time \\
\hline \\

W-PINN & $8.52 \pm 0.21 \times 10^{-2}$ & $8.94 \pm 0.13 \times 10^{-2} $ & $4.52 \pm 0.21 \times 10^{-1}$ & 42.71 min \\
MMPINN & $4.33 \pm 0.17 \times 10^{-2}$ & $3.91 \pm 0.21 \times 10^{-2}$ & $2.31 \pm 0.18 \times 10^{-1}$ & 182.16 min \\
\textbf{AW-PINN} & $\mathbf{
5.31 \pm 0.61 \times 10^{-3}
}$ & $\mathbf{
6.54 \pm 0.47 \times 10^{-3}
}$ & $\mathbf{9.01 \pm 0.83 \times 10^{-3}}$ & \textbf{76.57 min} \\

\hline
\end{tabular}
}
\end{table}

\begin{figure}[t!] 
    \centering
    \includegraphics[width=0.9\textwidth]{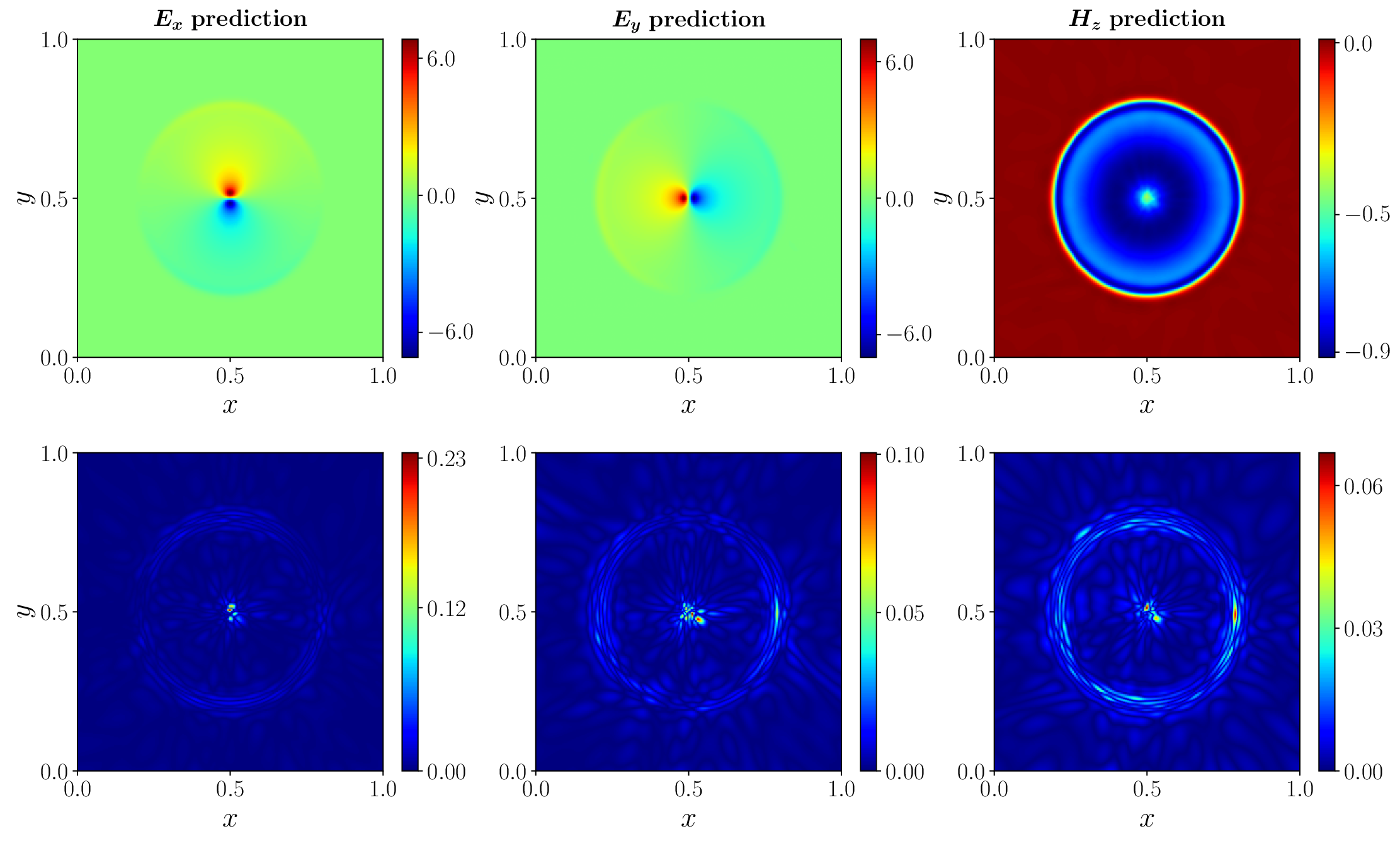} 
    \caption{The electromagnetic field prediction using AW-PINN for TEz Maxwell's equations \ref{problem:4} at the top panel and their respective point-wise absolute error at the bottom panel.}
    \label{Fig:P4_plot}
\end{figure}

To solve this system of equations, all physical quantities are non-dimensionalized such that $\epsilon_o$ and $\mu_o$ become unity, and take $\tau=0.25$ and $\omega=0.25$. We solve the problem for the domain $\Omega=[0,1]^2$ with the point source located at $(x_o,y_o) = (0.5,0.5)$ and starting with a null field condition. The simulation is run for the non-dimensional time $T=0.5$. For a reference solution, we employ the finite-difference time-domain (FDTD) method with spatial resolution as $\Delta x = \Delta y = 5\times10^{-3}$ and time step $\Delta t=1.5\times10^{-3}$, chosen to satisfy the CFL stability condition under the non-dimensional wave speed.

W-PINN and MMPINN are unable to solve this problem satisfactorily, which is evident from high relative $L_2-$error values in the Table \ref{Table:P4}. Further, the point-wise error plot shown in Fig. \ref{Fig:P4_plot} demonstrates higher accuracy of AW-PINN prediction.


\section{Conclusion} \label{sec: conclude}

In this study, we addressed the challenges encountered during the training of PINNs due to the severe loss imbalance between different loss components. In particular, we considered problems with localized high-magnitude source terms. For such problems, there is an extreme loss imbalance, and it requires special treatment. To tackle these issues, we proposed an extension of W-PINN that dynamically adapts the wavelet family based on residual and supervised loss functions. This adaptive mechanism enables the model to effectively capture fine-scale features without populating the entire domain with higher-resolution wavelets. Through various challenging problems, we have demonstrated significant improvement over W-PINN and the well-known MMPINN.

Although the method is promising and has achieved impressive accuracy with highly competitive training time, there is still room for improvement. The similarity-based selection of wavelet bases involves manual tuning partially and depends on the quality of the pre-training phase, suggesting that more automated selection schemes could improve robustness. Additionally, systematically investigating the performance of AW-PINN using various wavelet bases could be an interesting comparison. Moreover, future work could involve evaluating the AW-PINN across a broader class of multiscale problems to further assess its general applicability and performance.

\subsubsection*{Acknowledgment}
The authors would like to thank Mr. Anirudha Sen, Department of Mechanical Engineering, Indian Institute of Technology Bhilai, India, for his assistance in computations related to Neural Tangent Kernel theory for Physics-Informed Neural Networks.

\noindent 

 
\subsubsection*{Funding details}
\noindent No funding source is available for this research.

\subsubsection*{Data availability statements}
All source code to reproduce the results of this study will be made available on request.

\subsubsection*{Conflicts of interest and declarations}	
The authors declare that they do not have any conflicts of interest. In addition, they also declare that this work is not under consideration anywhere.

		
		
	
\bibliography{ReferencesSS}


\newpage
\appendix
\renewcommand{\thetable}{A.\arabic{table}} 
\setcounter{table}{0}   
\section*{Appendix A: Hyperparameters}\label{A1}


\begin{table}[h!]
\centering
\caption{Hyperparameters used for MMPINN}
\resizebox{\textwidth}{!}{
\begin{tabular}{lcccccccc}
\toprule
\textbf{Hyperparameter} 
& \multicolumn{3}{c}{\textbf{Heat Conduction \ref{problem:1}}} 
& \multicolumn{2}{c}{\textbf{Poisson Problem \ref{problem:2}}} 
& \textbf{Flow Eq. \ref{problem:3}} 
& \textbf{Maxwell's Eq. \ref{problem:4}} \\ 
\cmidrule(lr){2-4} \cmidrule(lr){5-6}

& $\epsilon = 0.12$ & $\epsilon = 0.11$ & $\epsilon = 0.10$
& $\epsilon = 0.05$ & $\epsilon = 0.02$ &  &  \\ 
\midrule

Hidden layers           & 5 & 5 & 6 & 5 & 5 & 5 & 8 \\
Neurons per layer       & 400 & 400 & 500 & 200 & 300 & 300 & 400 \\
Residual points $N_{res}$ & 90000 & 100000 & 200000 & 30000 & 50000 & 50000 & 150000 \\
Supervised points $N_{sup}$ & 9000 & 12000 & 21000 & 6000 & 8000 & 4000 & 20000 \\
Exponent $p$            & 3.0 & 4.0 & 4.0 & 3.0 & 4.0 & 3.0 & 2.0 \\
Exponent $q$            & 1.0 & 1.0 & 1.0 & 1.0 & 1.0 & 1.0 & 1.0 \\
ADAM iterations         & 1000 & 2000 & 4000 & 2000 & 2000 & 4000 & 10000 \\
\bottomrule
\end{tabular}}
\label{tab:mmpinn_hyper}
\end{table}

\vspace{0.5em}

\begin{table}[h!]
\centering
\caption{Hyperparameters used for W-PINN}
\resizebox{\textwidth}{!}{
\begin{tabular}{lcccccccc}
\toprule
\textbf{Hyperparameter} 
& \multicolumn{3}{c}{\textbf{Heat Conduction \ref{problem:1}}} 
& \multicolumn{2}{c}{\textbf{Poisson Problem \ref{problem:2}}} 
& \textbf{Flow Eq. \ref{problem:3}} 
& \textbf{Maxwell's Eq. \ref{problem:4}} \\ 
\cmidrule(lr){2-4} \cmidrule(lr){5-6}

& $\epsilon = 0.12$ & $\epsilon = 0.11$ & $\epsilon = 0.10$
& $\epsilon = 0.05$ & $\epsilon = 0.02$ &  &  \\ 
\midrule

Hidden layers           & 6 & 6 & 6 & 9 & 9 & 6 & 10 \\
Neurons per layer       & 100 & 100 & 100 & 50 & 50 & 100 & 200 \\
Residual points $N_{res}$ & 20000 & 20000 & 20000 & 10000 & 15000 & 20000 & 40000 \\
Supervised points $N_{sup}$ & 3000 & 3000 & 3000 & 4000 & 4000 & 2000 & 10000 \\
Residual loss weight $\omega_r$   & 0.01 & 0.01 & 0.01 & 0.01 & 0.01 & 1.0 & 0.1 \\
Supervised loss weight $\omega_s$   & 1.0 & 1.0 & 1.0 & 1.0 & 1.0 & 1.0 & 1.0 \\
Resolution $J_x$            & [-6,5] & [-6,5] & [-6,5] & [-6,6] & [-6,6] & [-6,6] & [-5,4] \\
Resolution $J_y$            & - & - & - & [-6,6] & [-6,6] & - & [-5,4] \\
Resolution $J_t$            & [-6,6] & [-6,6] & [-6,6] & - & - & [-6,6] & [-5,4] \\
Translation parameter $\gamma$   & 0.3 & 0.3 & 0.3 & 0.1 & 0.2 & 0.3 & 0.1 \\
ADAM iterations         & 1000 & 2000 & 2000 & 1000 & 2000 & 3000 & 10000 \\
\bottomrule
\end{tabular}}
\label{tab:mmpinn_hyper}
\end{table}

\begin{table}[h!]
\centering
\caption{Hyperparameters used for AW-PINN}
\resizebox{\textwidth}{!}{
\begin{tabular}{lcccccccc}
\toprule
\textbf{Hyperparameter} 
& \multicolumn{3}{c}{\textbf{Heat Conduction \ref{problem:1}}} 
& \multicolumn{2}{c}{\textbf{Poisson Problem \ref{problem:2}}} 
& \textbf{Flow Eq. \ref{problem:3}} 
& \textbf{Maxwell's Eq. \ref{problem:4}} \\ 
\cmidrule(lr){2-4} \cmidrule(lr){5-6}

& $\epsilon = 0.12$ & $\epsilon = 0.11$ & $\epsilon = 0.10$
& $\epsilon = 0.05$ & $\epsilon = 0.02$ &  &  \\ 
\midrule

Residual points $N_{res}$ & 20000 & 20000 & 20000 & 10000 & 10000 & 20000 & 40000 \\
Supervised points $N_{sup}$ & 3000 & 3000 & 3000 & 4000 & 4000 & 2000 & 10000 \\
Pre-trainingADAM iterations  & 1000 & 2000 & 3000 & 1000 & 2000 & 1000 & 10000 \\
Residual loss weight $\omega_r$   & 0.01 & 0.01 & 0.01 & 0.01 & 0.01 & 1.0 & 0.1 \\
Supervised loss weight $\omega_s$   & 1.0 & 1.0 & 1.0 & 10.0 & 10.0 & 1.0 & 1.0 \\
Threshold IC            & $> 0.5$ & $> 0.2$ & $> 0.4$ & - & - & $> 0.5$ & $< 0.95$ \\
Threshold BC   & $< 0.002$ & $< 0.02$ & $< 0.003$ & $> 0.2$ & $> 0.2$ & $>0.95$ & $< 0.95$ \\
Threshold PDE  & $< 0.980$ & $< 0.988$ & $< 0.984$ & $< 0.850$ & $< 0.975$ & $<0.25$ & $< 0.91$ \\
Top $\kappa\,\text{-}\,\%$            & 8 & 10 & 8 & 2 & 1 & 2 & 5 \\
\bottomrule
\end{tabular}}
\label{tab:mmpinn_hyper}
\end{table}

\end{document}